\documentclass{article} 
\usepackage{iclr2024_conference,times}


\usepackage{amsmath,amsfonts,bm}









\def\eqref#1{equation~\ref{#1}}









\def\1{\bm{1}}










\DeclareMathAlphabet{\mathsfit}{\encodingdefault}{\sfdefault}{m}{sl}
\SetMathAlphabet{\mathsfit}{bold}{\encodingdefault}{\sfdefault}{bx}{n}













\DeclareMathOperator*{\argmin}{arg\,min}

\usepackage{hyperref}
\usepackage{url}
\usepackage{graphicx}
\newtheorem{theorem}{Theorem}

\newtheorem{proof}{Proof}
\newtheorem{definition}{Definition}

\usepackage{booktabs}
\usepackage{multirow}

\title{AffineQuant: Affine Transformation Quantization for Large Language Models}


\author{Yuexiao Ma\textsuperscript{\rm 1}\thanks{This work was done when Yuexiao Ma was intern at ByteDance Inc. \\
Code is available at: \url{https://github.com/bytedance/AffineQuant}},
    Huixia Li\textsuperscript{\rm 2},
    Xiawu Zheng\textsuperscript{\rm 1,3,4},
    Feng ling\textsuperscript{\rm 2}, Xuefeng Xiao\textsuperscript{\rm 2}, \\
    \textbf{Rui Wang}\textsuperscript{\rm 2}, 
    \textbf{Shilei Wen}\textsuperscript{\rm 2},
    \textbf{Fei Chao}\textsuperscript{\rm 1},
    \textbf{Rongrong Ji}\textsuperscript{\rm 1,4}\thanks{Corresponding Author: rrji@xmu.edu.cn} \\
    \textsuperscript{\rm 1} Key Laboratory of Multimedia Trusted Perception and Efficient Computing, \\Ministry of Education of China, School of Informatics, Xiamen University, 361005, P.R. China.\\
    \textsuperscript{\rm 2} ByteDance Inc.
    \textsuperscript{\rm 3} Peng Cheng Laboratory, Shenzhen, China. \\
    \textsuperscript{\rm 4} Institute of Artificial Intelligence, Xiamen University.\\
}

%

\iclrfinalcopy 
\begin{document}

\maketitle

\begin{abstract}
The significant resource requirements associated with Large-scale Language Models (LLMs) have generated considerable interest in the development of techniques aimed at compressing and accelerating neural networks. 
Among these techniques, Post-Training Quantization (PTQ) has emerged as a subject of considerable interest due to its noteworthy compression efficiency and cost-effectiveness in the context of training. 
Existing PTQ methods for LLMs limit the optimization scope to scaling transformations between pre- and post-quantization weights. 
This constraint results in significant errors after quantization, particularly in low-bit configurations. 
In this paper, we advocate for the direct optimization using equivalent Affine transformations in PTQ (AffineQuant). 
This approach extends the optimization scope and thus significantly minimizing quantization errors. 
Additionally, by employing the corresponding inverse matrix, we can ensure equivalence between the pre- and post-quantization outputs of PTQ, thereby maintaining its efficiency and generalization capabilities. 
To ensure the invertibility of the transformation during optimization, we further introduce a gradual mask optimization method. 
This method initially focuses on optimizing the diagonal elements and gradually extends to the other elements. 
Such an approach aligns with the Levy-Desplanques theorem, theoretically ensuring invertibility of the transformation. 
As a result, significant performance improvements are evident across different LLMs on diverse datasets. 
Notably, these improvements are most pronounced when using very low-bit quantization, enabling the deployment of large models on edge devices. 
To illustrate, we attain a C4 perplexity of $15.76$ ({\color{red} 2.26$\downarrow$} vs $18.02$ in OmniQuant) on the LLaMA2-$7$B model of W$4$A$4$ quantization without overhead. 
On zero-shot tasks, AffineQuant achieves an average of $58.61\%$ accuracy ({\color{red} $1.98\%\uparrow$} vs $56.63$ in OmniQuant) when using $4$/$4$-bit quantization for LLaMA-$30$B, which setting a new state-of-the-art benchmark for PTQ in LLMs. 
\end{abstract}

\section{Introduction}
\label{intro}

Large Language Models (LLMs) \citep{zhang2022opt, touvron2023llama, touvron2023llama2} attract increasing attention due to their impressive performance. However, emergent logical reasoning abilities \citep{wei2022emergent} are only present in models above a certain size threshold.
Hence, the training and inference efficiency of LLMs necessitates careful consideration. 
Specifically, the potential utilization of LLMs for inference on mobile and edge devices drives our motivation to focus on accelerating model inference. 
Quantization is regarded as one of the most promising methods among these compression methods. 
In particular, it maps weights or activations to lower bit representations, effectively reducing the memory usage of the model. 
Additionally, optimizing the compilation of operators for low-bit operations \citep{mlc-llm} significantly enhance their efficiency and accelerate model inference. 

Meanwhile, due to the substantial computational resources and high-quality data required for training LLMs, implementing quantization through model fine-tuning is challenging.
Therefore, the research community is increasingly emphasizing training-free algorithms, termed as Post-Training Quantization (PTQ) \citep{wei2022outlier, xiao2023smoothquant, wei2023outlier, lin2023awq, shao2023omniquant, yuan2023rptq}.
Post-training quantization allows for efficient optimization with less calibration data. However, this process can lead to significant performance degradation, particularly in small-sized models or low-bit scenarios.

\begin{figure}[t]
\begin{center}
\includegraphics[width=1.0\linewidth]{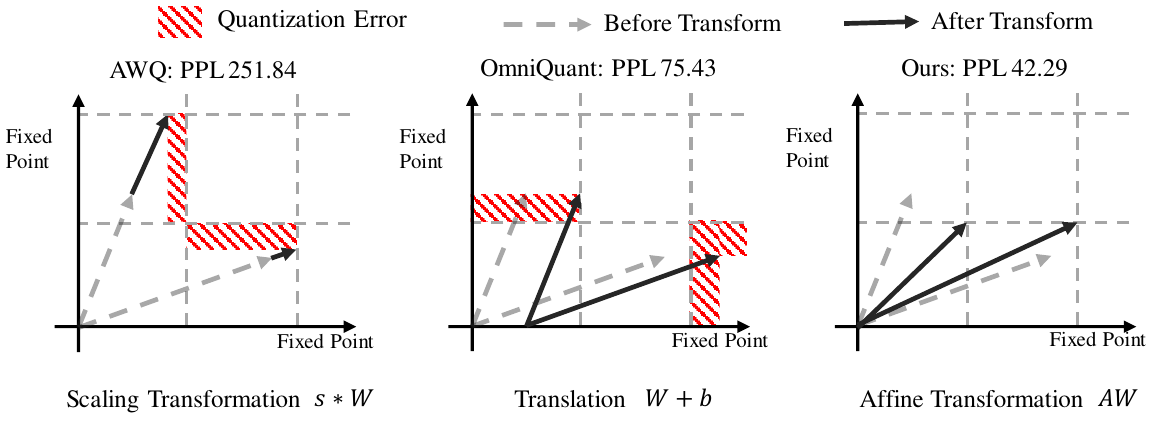}
\end{center}
\caption{The effect of scaling, translation and affine transformation on the quantization of the weights. The term ``Fixed Point" refers to the $2^n-1$ quantization levels in $n$-bit quantization. $s$, $b$, and $A$ are the scaling factor, translation factor, and affine transformation matrix, respectively. We assume that the input channel and output channel of $W$ is $2$. We consider each output channel as a two-dimensional vector.}
\label{fig:transformation}
\end{figure}

Equivalent transformations are widely adopted in most PTQ methods. 
As illustrated in Figure~\ref{fig:transformation}, 
AWQ \citep{lin2023awq} enhances scale computation by optimizing statistics and introduces the mean square error loss between the pre- and post-quantization feature maps as an optimization metric for the first time in LLMs.
Recently, Omniquant \citep{shao2023omniquant} introduces block-wise learnable scale and shift parameters for enhanced optimization. 
In higher dimensions, the concept of equivalent quantization is also gaining attention. RPTQ \citep{yuan2023rptq} achieves activation quantization per-cluster by sorting the columns of activation values. 
The reordering can be mathematically represented by converting the scale from a vector into a matrix form, where each row and column corresponds to a single scale value. This transformation effectively rearranges the activation columns and weight rows in an equivalent manner. 
In summary, the evolution of equivalent quantization progresses from manual design to gradient optimization, from low-dimensional to high-dimensional transformations, and from single-scale merging to a combination of multiple operations, including translation and reordering.
Equivalent quantization offers advantages in two main aspects. Firstly, by ensuring consistency between the pre- and post-quantization outputs, the introduced quantization noise can be effectively mitigated through optimization of the equivalence transform parameters. This aligns with the concept of post-training quantization, where the equivalence transform acts as an intermediate agent for noise improvement.
Secondly, different types of equivalence transforms are orthogonal to each other. Intuitively, the introduction of each new type of equivalence transform expands the parameter optimization space, resulting in performance improvements.

Therefore, we propose an algorithm for equivalent affine transformation. Specifically, we left-multiply the affine transform matrix to weights in the linear layer and right-multiply the activations with the transform matrix inverse. 
Guided by the mean square error loss, we optimize the affine transformation matrix, resulting in consistently lower loss compared to other algorithms on a wide range of models during the optimization process.
Furthermore, we explore the invertibility of matrices during the optimization process. The Levy-Desplanques theorem \citep{naimark1997extension} demonstrates that the strictly diagonally dominant matrices are invertible. 
To ensure that the affine transformation matrix is strictly diagonally dominant, we employ diagonal initialization and gradual mask methods.
In this way, the optimization of high-dimensional matrices with limited calibration data is stabilized through a gradual optimization process that involves freezing the parameters.
In terms of inference efficiency, our method is consistent with other methods after matrix merging. 
Finally, our method achieves state-of-the-art performance in LLMs quantization, particularly in scenarios involving small-scale models or lower bit configurations.
Overall, our contributions are summarized as follows:
\begin{itemize}
\item We propose a novel affine transform in PTQ, which retains the benefits of PTQ, assures efficiency and generalization, significantly minimizes quantization error, especially under low-bit quantization, and enables the deployment of LLMs on edge devices. 
\item We propose a novel optimization algorithm that guarantees invertibility throughout the process, utilizing the Levy-Desplanques theorem, and simultaneously reduces computational costs.
\item Our method obtains the state-of-the-art performance for large language model quantization, especially on low-bit or small models. Without additional overhead, on the w$4$a$4$ configuration of LLaMA2-$7$B, our perplexity on the C4 dataset is $15.76$ ({\color{red} 2.26$\downarrow$} vs $18.02$ in OmniQuant). Similarly, on the w4a4 configuration of LLaMA-$30$B, our accuracy on $6$ zero-shot tasks is $58.61\%$ ({\color{red} 1.98 $\uparrow$ } vs $56.63$ in OmniQuant).
\end{itemize}

\section{Related Work}
\label{Related_Work}

Quantization can be classified into two main categories based on algorithmic efficiency and data requirements: Quantization-Aware Training (QAT) and Post-Training Quantization (PTQ). QAT \citep{bondarenko2023quantizable, choi2018pact, yao2021hawq, gong2019differentiable, wang2019haq, esser2019learned, sun2020ultra, lee2021network} requires a substantial amount of data for fine-tuning the model weights, making it challenging to support LLMs. In contrast, we focus on PTQ \citep{nagel2020up, li2021brecq, wei2022qdrop, ma2023solving, liu2021post, cai2020zeroq, cheng2023optimize} in this study due to its efficient algorithmic approach.

\textbf{Post-Training Quantization (PTQ).} In the field of CNN, PTQ primarily focuses on optimizing weight rounding strategies. Adaround \citep{nagel2020up} improves quantization models through optimized weight rounding, considering rounding up or down. BRECQ \citep{li2021brecq} introduces a block-wise optimization process and incorporates squared gradient information. QDROP \citep{wei2022qdrop} enhances the performance of quantized models by randomly drop quantized activations. 

\textbf{Large Language Model Quantization.} To address computational resource constraints, the community has focused on efficient quantization algorithms for Large Language Models (LLMs). LLMs quantization can be categorized into weight-only quantization \citep{frantar2022optimal,lin2023awq,shao2023omniquant,kim2023squeezellm} and weight-activation quantization \citep{xiao2023smoothquant,wei2023outlier,yao2022zeroquant, dettmers2022llm,shao2023omniquant}, depending on whether activations are quantized. Given the large size of LLMs, memory access efficiency becomes a primary bottleneck for acceleration. Weight-only quantization addresses this by compressing model weights to lower bit precision, effectively mitigating the memory wall problem \citep{kim2023squeezellm}.



\section{Methodology}
\label{methods}


In this section, we introduce AffineQuant, an approach that utilizes equivalent affine transformation for quantization. Compared to other methods, AffineQuant consistently maintains optimal mean square error throughout the optimization process. We also explore the reversibility of the affine transform matrix during optimization. To ensure stability, we propose a gradual masking approach based on the Levy–Desplanques theorem \citep{naimark1997extension} to maintain the affine transform matrix as a strictly diagonally dominant matrix. Lastly, we analyze the inference efficiency of LLMs following the optimization performed by AffineQuant.

\subsection{AffineQuant}

When considering the concept of equivalent transformations from a physical perspective, we can draw analogies to certain operations. For instance, in SmoothQuant \citep{xiao2023smoothquant}, we can analogize scale to scaling operations for vectors, while in Outlier Suppression+ \citep{wei2023outlier}, we can analogize shift to translation operations for vectors. Similarly, rotations of vectors can also be classified as equivalent transformations.

We define the pseudo-quantization function as follows:

\begin{equation}\label{eq:definition}
	 \mathcal{Q}(x) = \Delta*\left ( clamp \left ( \left \lfloor \frac{x}{\Delta} \right \rceil + zp,0,2^n-1 \right ) - zp \right ), 
\end{equation}

where $\Delta$, $zp$ and $n$ are the quantization step-size, zero point and bits, respectively. $ \lfloor\cdot\rceil $ is the rounding operation. 
As depicted in Figure~\ref{fig:transformation}, AffineQuant involves left-multiplying the affine transform matrix $A$ by weight matrix $W$ to better align the weight distribution with the quantization function $Q(\cdot)$. Expanding the optimization space enables smaller quantization errors in the transformed weights, leading to a reduction in perplexity. Simultaneously, we right-multiply the inverse of the affine transform matrix $A$ by the activation value $X$ to maintain the invariance of the matrix multiplication output between activations and weights.
For a single linear layer, AffineQuant formulates the following optimization problem:

\begin{equation}\label{eq:definition}
	 \argmin_{A} \left \| XW - XA^{-1}\mathcal{Q}(AW) \right \|_F^2. 
\end{equation}

AffineQuant incorporates the essence of AWQ \citep{lin2023awq} and SmoothQuant \citep{xiao2023smoothquant} when the main diagonal elements of the matrix $A$ are computed from weight and activation statistics. It aligns with OmniQuant \citep{shao2023omniquant} by exclusively updating the diagonal elements of $A$. The reordering matrices used in RPTQ \citep{yuan2023rptq} are a subset of the affine transformation matrix $A$ when each row and column of $A$ contains only one occurrence of the element $1$. In summary, AffineQuant encompasses various previous equivalent quantization algorithms, thereby expanding the optimization possibilities for the weight distribution $W$.

In Figure~\ref{fig:transformation}, let the weight matrix $W\in\mathbb{R}^{2\times2}$ have $2$ output channels and input channels. The scaling factor, translation factor, and affine transformation matrix are denoted as $s$, $b$, and $A$, respectively. We divide the weight matrix into $2$ vectors $ \left\{ v_1, v_2 \right\}$ based on output channels.
The scaling transform $s_i * v_i$ uniformly scales each element of $v_i$. The translation transform $v_i + b_{i}$ shifts $v_i$ along different axes. The affine transformation $Av_i$ allows for arbitrary repositioning of $v_i$. However, the scaling and translation transformations are limited in their ability to map dimensions in $v_i$ to adjacent quantized fixed points. In contrast, the affine transformation guarantees convergence of all dimensions in a vector to the quantized fixed point.
In other words, the affine transformation aligns the weight distribution with the noise introduced by the quantization function $\mathcal{Q}(x)$ in Equation~\ref{eq:definition}, resulting in reduced quantization error. 
It is worth noting that normalizing the affine transformation matrix by rows $ \left( A \rightarrow s^{'} A^{'} \right) $, where each row of the matrix $A^{'}$ has a norm of $1$, transforms $A^{'}$ into a standard rotation matrix. This rotation matrix rotates the output channels of the weights while preserving their magnitudes. The scaling factor $s^{'}$ performs scaling on the rotated vectors. Therefore, the affine transformation matrix $A$ combines both scaling and rotation equivalent transformations and is orthogonal to the translation transformation.

The perplexity (ppl) exhibits an exponential relationship with the cross-entropy (CE) loss, which is positively correlated with the mean square error of the output activation before and after quantization, as demonstrated in \citep{nagel2020up,li2021brecq}. Hence, optimizing perplexity can be achieved by optimizing the mean square error before and after quantization. Specifically,

\begin{equation}\label{eq:corelation}
	 PPL \propto \mathcal{L}_{CE} \propto \left \| XW - XA^{-1}\mathcal{Q}(AW) \right \|_F^2, 
\end{equation}

In large language models quantization, the optimization objective of AffineQuant is as follows,

\begin{equation}\label{eq:optimization}
	 \argmin_{A,\delta} \left \| f_{i}\left(X, W\right) - f_i \left ( \left (X-\delta \right )A^{-1}, \mathcal{Q}\left( AW \right), b+\delta W \right )\right \|_F^2. 
\end{equation}
where $f_i$ is the $i$-th transformer block. $ \left (X-\delta \right )A^{-1} $, $ \mathcal{Q}\left( AW \right) $, $ b+\delta W $ are the activation, weight and bias after the equivalent transformation, respectively. We combine the affine and translation transformations and use the mean square error of the transformer block output, both pre- and post-quantization, as the optimization objective.

Figure~\ref{fig:mse_loss} illustrates the mean square error loss optimization for the last transformer block of LLaMA-7B and OPT-1.3B. Notably, AffineQuant exhibits a lower initial loss compared to OmniQuant \citep{shao2023omniquant} due to the superior performance of the affine transformation matrices in the preceding blocks. Additionally, AffineQuant demonstrates faster loss convergence and superior overall optimization performance in the last block compared to OmniQuant. These results reaffirm the significant potential of invertible matrix optimization. Figure~\ref{fig:affine_matrix1} and \ref{fig:affine_matrix2} in Appendix~\ref{Loss and Model Performance} presents a random sampling of multiple stability factors ($alpha$), which impact on quantization loss convergence. The data reveals a notable link between the last transformer block’s quantization loss and the quantized model's performance. This implies AffineQuant's effectiveness in reducing quantization loss in Figure~\ref{fig:mse_loss}, thereby enhancing the model's quantization performance during optimization.

\begin{figure}[t]
\begin{center}
\includegraphics[width=1.0\linewidth]{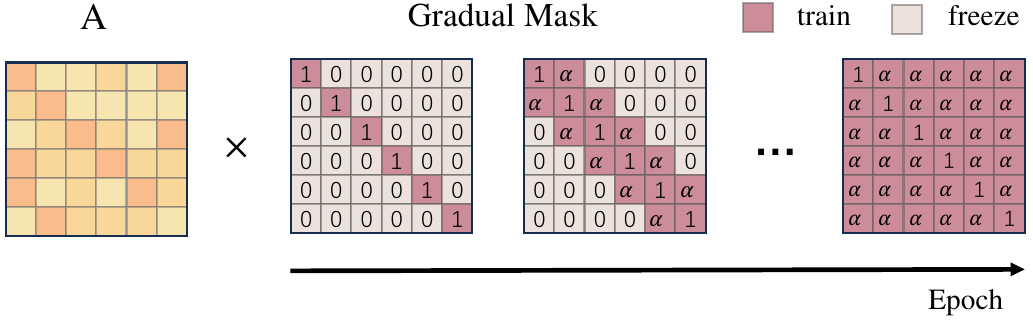}
\end{center}
\caption{The gradual mask operates on the affine transformation matrix, gradually incorporating the elements of matrix $A$ near the diagonal into the training process as training progresses.}
\label{fig:gradual_mask}
\end{figure}

\subsection{Reversibility and Gradual Mask}
\label{Reversibility}

In the optimization process, it is necessary to invert the affine transformation matrix. However, we do not include any constraints in the objective function (Equation~\ref{eq:optimization}) to ensure the matrix remains full rank or well-conditioned. Therefore, how to keep the matrix invertible during the optimization process? To begin, let's define a strictly diagonally dominant matrix as follows:

\begin{definition} \textbf{\textit{(Strictly Diagonally Dominant Matrix)}} A matrix $A$ is considered strictly diagonally dominant if the absolute value of each diagonal element is greater than the sum of the absolute values of the remaining elements in the corresponding row. Specifically, 

\begin{equation}\label{eq:sddm}
	 |a_{ii}| > \sum_{i \neq j}|a_{ij}|,\quad for \  all \ i.
\end{equation}

\label{sddm_defin}
\end{definition}

The Levy-Desplanques theorem \citep{naimark1997extension} establishes that all strictly diagonally dominant matrices are invertible. By initializing the affine transformation matrix with diagonal elements, we ensure it initially is strictly diagonally dominant. Although utilizing second-order momentum and lower learning rates in the optimizer can assist in satisfying the requirements of the Levy-Desplanques theorem, the optimization of large affine transform matrices still faces instability challenges as the model size increases.

To ensure that the affine transformation matrix remains strictly diagonally dominant during optimization, we introduce a gradual mask approach, as illustrated in Figure~\ref{fig:gradual_mask}. At the start of each optimization block, we freeze all elements except for those on the main diagonal. As the optimization progresses, we gradually unfreeze the elements near the main diagonal. Eventually, all matrix elements become learnable for optimization. This freezing mechanism, referred to as the \textbf{G}radual \textbf{M}ask (GM), is defined as follows:

\begin{equation}\label{eq:gradual_mask}
	 GM_{ij}= 
        \left\{ 
    \begin{array}{ll}
        1 & i=j, \\
        \alpha & 0<|i-j| \leq \frac{e}{t}\times hidden~size, \\
        0 & otherwise, \\
    \end{array}
\right.
\end{equation}

Where $GM_{ij}$ is the $i$-th row, $j$-th column element of the mask matrix. $t$ is the target epochs. $e\in\left [1,t \right ]$ is the current epochs. ``hidden size’’ is the dimension of the affine transformation matrix. $\alpha$ is the stability factor. 
Within the attention module, we apply a gradual mask in each attention head. 
GM is a learning rate regulator that achieves its purpose by element-wise dot-producting with the matrix $A$.
Specifically, the impact of the GM matrix on the optimization process can be divided into two aspects. Here, we present the optimization process for the matrix A after incorporating the GM.

\begin{align}
    \label{eq:definition1} \textbf{Forward:}\quad &A^{*}_{e} = A_{e}\circ GM_{e}, \\
    \label{eq:definition2_1} \textbf{Backward:}\quad &A_{e+1} = A_{e}+\eta\frac{\partial L}{\partial A^{*}_{e}}\frac{\partial A^{*}_{e}}{\partial A_{e}}, \\
    \label{eq:definition3_1} &\quad\quad~=A_{e}+\eta GM_{e}\frac{\partial L}{\partial A^{*}_{e}}.
\end{align}
Where $\circ$ is the Hadamard product. $A_{e}$ and $GM_{e}$ are the matrices $A$ and Gradual Mask (GM) matrix in epoch $e$, respectively. $\eta$ is the learning rate of matrix $A$. $L$ is the optimization loss. The GM matrix effectively reduces the magnitude of non-principal diagonal elements in matrix $A$ during forward propagation when the stability factor $\alpha$ is less than $1$. This ensures the existence of a stable inverse matrix of $A^{*}$ in the optimization process during epoch $e$, as per the Levy-Desplanques theorem. In backward propagation, GM affects the learning rate $\eta$, thereby suppressing the update rate of non-primary diagonal elements in matrix $A$. Consequently, the impact of GM on $\eta$ ensures that matrix $A$ in epoch $e+1$ maintains strictly diagonally dominant, satisfying the Levy-Desplanques theorem.
Notably, as $\alpha$ approaches $0$, the optimization process converges stably and becomes equivalent to OmniQuant \citep{shao2023omniquant}.
Additionally, Appendix~\ref{Strictly diagonal dominance guaranteed} includes a theorem demonstrating that a sufficiently small stability factor $\alpha$ ensures the strictly diagonal dominance of matrix A during optimization.

The concept of gradual adaptation is also present in the post-training quantization of vision models. In Adaround \citep{nagel2020up}, the gradient update of parameters is controlled using gradual powers of $\beta$ in the soft quantization function. When $\beta$ is sufficiently large, only values close to $0$ or $1$ are updated due to gradient limitations. As optimization progresses, the gradient of all rounded values is gradually released. However, it is important to note that the goals and approaches of the two methods are distinct. Adaround \citep{nagel2020up} employs the gradual power $\beta$ to prevent fast convergence of the objective function, which can lead to suboptimal optimization. On the other hand, the gradual mask in AffineQuant ensures the strictly diagonally dominant property of the affine transformation matrix. Appendix~\ref{heatmap} showcases heat maps depicting different block affine transformation matrices at various epochs, demonstrating the effectiveness of the gradual mask approach in maintaining strictly diagonally dominant matrices.

\begin{figure}[t]
\begin{center}
\includegraphics[width=1.0\linewidth]{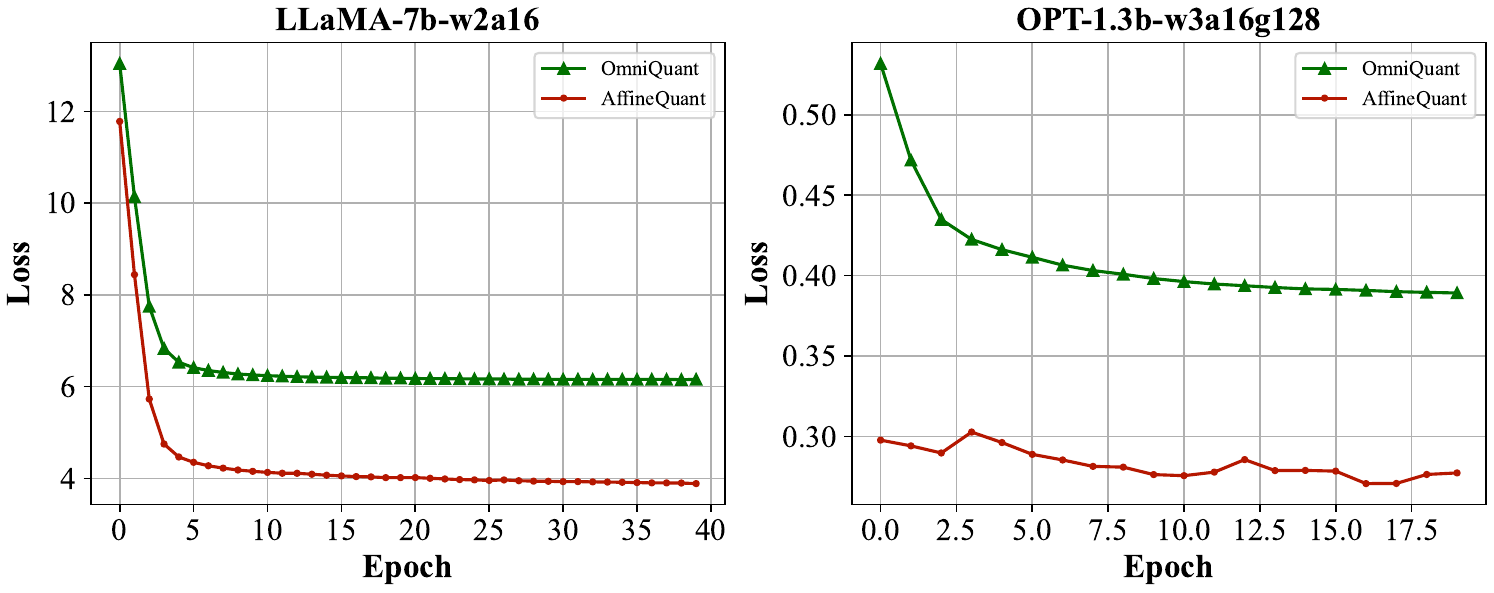}
\end{center}
\caption{Mean square error loss of the last transformer block of LLaMA-7b and OPT-1.3b. ``w2a16’’ means $2$-bit weight-only quantization. ``w3a16g128’’ means $3$-bit grouping $128$ weight-only quantization. We optimize $40$ and $20$ epochs in the last block of LLaMA-7b and OPT-1.3b, respectively.}
\label{fig:mse_loss}
\end{figure}

\subsection{Efficiency}

\textbf{Optimize Efficiency.} PyTorch's linear algebra library \citep{paszke2019pytorch} offers matrix inverse computations in both float and double precision. Consequently, we maintain the model's precision as either float or double throughout the optimization process. Furthermore, approximate computations of the matrix inverse may contain errors due to the numerical precision limitations of the computer. Therefore, we analyze memory consumption, optimization time, error magnitude, and the impact on model performance for both precision types in Section~\ref{Ablation_Study}.

\textbf{Inference Efficiency.} In line with similar algorithms, we integrate the affine transformation matrix with other layers. Subsequently, we perform half-precision inference on the network. For all linear layers, we merge the affine transformation matrix with the weight and bias parameters. 
In addition, we only optimize the diagonal elements of the affine matrix after LayerNorm for weight-activation quantization. This allows us to merge the affine matrix with the LayerNorm weights and bias. Consequently, AffineQuant can be achieved without introducing any additional overhead to model inference.
Tables~\ref{tab:weight-activation quant} and \ref{tab:weight-activation quant ppl} demonstrate AffineQuant's superior performance over other methods in zero-shot and PPL tasks, even without additional overhead, using $4$/$4$-bit quantization.

\begin{table*}[t!]
\small
\centering
\caption{Weight-only quantization PPL($\downarrow$) results on the OPT model WikiText2 dataset.}
\setlength{\tabcolsep}{1.9mm}{
\begin{tabular}{clcccccc}
\toprule  
Config & 
Method &
$125$M&
$1.3$B&
$2.7$B &
$6.7$B &
$13$B &
$30$B \\
\midrule  

FP16 & - & $27.65$   & $14.63$ & $12.47$  & $10.86$ &  $10.12$ &  $9.56$ \\

\midrule 

\multirow{5}{*}{w3a16} & RTN & $1.2$e$3$  & $1.3$e$4$ & $1.6$e$4$ & $6.5$e$3$ & $4.6$e$3$ & $1.5$e$3$ \\
& GPTQ \citep{frantar2022gptq}& $53.05$  & $21.17$ & $16.83$ & $15.09$ & $11.73$ & $10.30$ \\
& AWQ \citep{lin2023awq}& $69.43$  & $28.01$ & $263.10$ & $15.13$ & $20.09$ & $35.74$\\
& OmniQuant \citep{shao2023omniquant}& $35.66$  & $16.68$ & $13.80$ & $11.65$ & $10.87$ & $10.00$ \\
& AffineQuant  & $30.56$  & $15.94$ & $13.15$ & $11.44$ & $10.76$ & $9.98$ \\
\midrule 

\multirow{5}{*}{w3a16g128} & RTN & $51.22$  & $119.00$ & $297.98$ & $23.54$ & $46.03$ & $18.80$ \\
& GPTQ \citep{frantar2022gptq}& $39.24$  & $16.47$ & $13.69$ & $11.65$ & $10.35$ & $9.73$ \\
& AWQ \citep{lin2023awq}& $36.74$  & $16.32$ & $13.58$ & $11.41$ & $10.68$ & $9.85$\\
& OmniQuant \citep{shao2023omniquant}& $32.25$  & $15.72$ & $13.18$ & $11.27$ & $10.47$ & $9.79$ \\
& AffineQuant  & $30.21$  & $15.61$ & $12.98$ & $11.18$ & $10.51$ & $9.81$ \\
\midrule 

\multirow{5}{*}{w4a16} & RTN & $37.28$  & $48.17$ & $16.92$ & $12.10$ & $11.32$ & $10.97$ \\
& GPTQ \citep{frantar2022gptq}& $31.43$  & $15.56$ & $12.82$ & $11.41$ & $10.31$ & $9.63$ \\
& AWQ \citep{lin2023awq}& $32.28$  & $15.49$ & $12.93$ & $11.30$ & $10.39$ & $9.77$\\
& OmniQuant \citep{shao2023omniquant}& $29.45$  & $15.04$ & $12.76$ & $11.03$ & $10.30$ & $9.65$ \\
& AffineQuant  & $28.39$  & $14.92$ & $12.64$ & $10.96$ & $10.26$ & $9.65$ \\
\midrule 

\multirow{5}{*}{w4a16g128} & RTN & $30.47$  & $15.29$ & $13.02$ & $11.15$ & $10.30$ & $9.94$ \\
& GPTQ \citep{frantar2022gptq}& $29.81$  & $14.89$ & $12.52$ & $10.93$ & $10.17$ & $9.58$ \\
& AWQ \citep{lin2023awq}& $29.15$  & $14.94$ & $12.74$ & $10.93$ & $10.21$ & $9.59$\\
& OmniQuant \citep{shao2023omniquant}& $28.86$  & $14.88$ & $12.65$ & $10.96$ & $10.20$ & $9.62$ \\
& AffineQuant  & $28.33$  & $14.79$ & $12.58$ & $10.92$ & $10.19$ & $9.62$ \\
\bottomrule 
\end{tabular}
}

\label{tab1:PTQ}
\end{table*}

\begin{table*}[t!]
\small
\centering
\caption{AffineQuant and OmniQuant quantization performance of LLaMA-7B, 13B, 30B on six zero-shot datasets using 4/4 bit quantization.}
\setlength{\tabcolsep}{1.0mm}{
\begin{tabular}{c|l|ccccccc}
\toprule  
 &  \multirow{2}{*}{Dataset}& PIQA & ARC-e & WinoGrande&
BoolQ &
ARC-c&
HellaSwag &
Avg.  \\
&&($\uparrow$)&($\uparrow$)&($\uparrow$)&($\uparrow$)&($\uparrow$)&($\uparrow$)&($\uparrow$)\\
\hline

\multirow{3}{*}{\shortstack{LLaMA-$7$B \\w$4$a$4$}}  & FP16 & $77.47$  & $52.48$ & $67.07$ & $73.08$ & $41.46$ & $73.00$ & $64.09$ \\
& OmniQuant~\cite{shao2023omniquant} & $66.15$  & $45.20$ & $53.43$ & $63.51$ & $31.14$ & $56.44$ & $52.65$  \\
& AffineQuant & $69.37$  & $42.55$ & $55.33$ & $63.73$ & $31.91$ & $57.65$ & $53.42$  \\
\hline
\multirow{3}{*}{\shortstack{LLaMA-$13$B \\w$4$a$4$}} & FP16 & $79.10$  & $59.89$ & $70.31$ & $68.01$ & $44.45$ & $76.21$ & $66.33$ \\
& OmniQuant~\cite{shao2023omniquant} & $69.69$  & $47.39$ & $55.80$ & $62.84$ & $33.10$ & $58.96$ & $54.37$ \\
& AffineQuant & $66.32$  & $43.90$ & $54.70$ & $64.10$ & $29.61$ & $56.88$ & $52.58$  \\
\hline
\multirow{3}{*}{\shortstack{LLaMA-$30$B \\w$4$a$4$}} & FP16 & $80.08$  & $58.92$ & $72.53$ & $68.44$ & $45.47$ & $79.21$ & $67.44$ \\
& OmniQuant~\cite{shao2023omniquant} & $71.21$  & $49.45$ & $59.19$ & $65.33$ & $34.47$ & $64.65$ & $56.63$ \\
& AffineQuant & $70.84$  & $49.41$ & $58.64$ & $70.12$ & $37.12$ & $65.53$ & $58.61$  \\
\bottomrule 
\end{tabular}
}

\label{tab:weight-activation quant}
\end{table*}

\begin{table*}[t!]
\small
\centering
\caption{Quantization performance of LLaMA1\&2 on WikiText2 and C4 datasets using 4/4 bit weight-activation quantization.}
\setlength{\tabcolsep}{1.0mm}{
\begin{tabular}{c|c|l|ccccc}
\toprule  
 Datasets&LLaMA1\&2& Methods & $1$-$7$B & $1$-$13$B & $1$-$30$B&
$2$-$7$B &
$2$-$13$B   \\
\hline
\multirow{4}{*}{WikiText2}&FP16 & - & $5.68$  & $5.09$ & $4.10$ & $5.47$ & $4.88$   \\
\cline{2-8}
&\multirow{3}{*}{W4A4}  & SmoothQuant~\cite{xiao2023smoothquant} & $25.25$  & $40.05$ & $192.40$ & $83.12$ & $35.88$   \\
&& OmniQuant~\cite{shao2023omniquant} & $11.26$  & $10.87$ & $10.33$ & $14.26$ & $12.30$  \\
&& AffineQuant & $10.28$  & $10.32$ & $9.35$ & $12.69$ & $11.45$  \\

\hline
\multirow{4}{*}{C4}&FP16 & - & $7.08$  & $6.61$ & $5.98$ & $6.97$ & $6.46$   \\
\cline{2-8}
&\multirow{3}{*}{W4A4}  & SmoothQuant~\cite{xiao2023smoothquant} & $32.32$  & $47.18$ & $122.38$ & $77.27$ & $43.19$   \\
&& OmniQuant~\cite{shao2023omniquant} & $14.51$  & $13.78$ & $12.49$ & $18.02$ & $14.55$  \\
&& AffineQuant & $13.64$  & $13.44$ & $11.58$ & $15.76$ & $13.97$  \\

\bottomrule 
\end{tabular}
}

\label{tab:weight-activation quant ppl}
\end{table*}

\section{Experiments}
\label{experiments}

\subsection{Implementation Details}

\noindent \textbf{Algorithm Details.} In AffineQuant, the stability factor $\alpha$ decreases as the model size increases, the quantization bits decrease, and the group size increases. 
For OPT-$6.7$B and smaller models, we set $\alpha=1$. As the model size increases, we use $\alpha=1e-2$ for configurations with weight quantization of $3$ bits or more. For other configurations, we select $\alpha$ from the set $\{1e-2,1e-3,1e-4\}$.
Then, we exclude the affine transformation between the two linear layers in the MLP module. This is because the optimization of the large transformation matrix in inflated dimensions is challenging. Additionally, the presence of the activation function renders the equivalent transformation of the matrix invalid.


\subsection{Evaluation Experiments}

As demonstrated in Tables~\ref{tab1:PTQ} and \ref{tab:weight-activation quant ppl}, we observe consistent performance improvements across all models with various quantization configurations. This indicates that AffineQuant is not reliant on a particular quantization configuration.
Notably, AffineQuant exhibits significant improvements, particularly in cases of low-bit quantization or smaller model sizes. Specifically, in the w$3$a$16$g$128$ configuration on the OPT-$125$M model, we achieve a perplexity reduction of $5.10$, surpassing the performance of OmniQuant \citep{shao2023omniquant} by a large margin.
Furthermore, we achieve perplexity reductions of $2.26$ and $1.57$ on LLaMA$2$-$7$B models with C4 and WikiText2 datasets, under the w$4$a$4$ quantization configuration.
The aforementioned results underscore the importance of expanding the optimization space for the equivalence factor in challenging quantization tasks. For additional dataset results, please refer to the Appendix.

\subsection{Ablation Study}
\label{Ablation_Study}

\begin{table*}[t!]
\small
\centering
\caption{PPL, memory usage, optimization runtime, and merge error for the OPT model under three precision schemes. The ``double" scheme maintains double precision for both the model and the transform matrix. The ``float" scheme indicates that both the model and the transform matrix are in float precision. The ``float-double" scheme denotes that the model is in float precision while the transform matrix is in double precision.}
\setlength{\tabcolsep}{1.0mm}{
\begin{tabular}{cccccccc}
\toprule  
 & 
\multirow{2}{*}{Merge Error} &
\multicolumn{3}{c}{OPT-$125$M w$2$a$16$g$64$}&
\multicolumn{3}{c}{OPT-$6.7$B w$4$a$16$}
\\
\cline{3-8} 
 &  & PPL &
Memory Utilization &
Runtime&
PPL &
Memory Utilization &
Runtime\\
\midrule 
FP16 & - & $24.60$   & - & -  & $11.74$ &  - & - \\
\midrule 

Double & $1.88$e$-16$ & $42.43$  & $7065.5$Mb & $1.19$h & $11.91$ & $41414.3$Mb & $16.7$h\\
Float& $2.58$e$-3$ & $42.91$  & $3586.6$Mb & $0.78$h & $11.90$ & $21188.9$Mb & $8.65$h\\
Float-Double& $3.48$e$-4$ & $42.88$  & $3663.6$Mb & $0.85$h & $11.96$ & $23189.5$Mb & $12.72$h\\
\bottomrule 
\end{tabular}
}

\label{tab:numerical precision}
\end{table*}

\begin{table*}[t!]
\small
\centering
\caption{Effect of different stability factors $\alpha$ on model performance of OPT-$125$M and LLaMA-$7$B.}
\setlength{\tabcolsep}{1.0mm}{
\begin{tabular}{c|c|c|ccccccccc}
\toprule  
 &  Dataset& FP$16$ & $1$e$0$ & $1$e$-1$ &
$1$e$-2$ &
$1$e$-3$&
$1$e$-4$ &
$1$e$-5$ &
$1$e$-6$ &
$1$e$-7$&
$1$e$-8$\\
\hline

\multirow{3}{*}{\shortstack{OPT-$125$M \\w$2$a$16$g$128$}} & WikiText2 & $27.65$  & $42.08$ & $45.32$ & $65.77$ & $76.03$ & $76.78$ & $76.00$ & $76.48$ & $76.55$ & $76.46$\\
& PTB & $32.54$  & $63.60$ & $68.29$ & $114.41$ & $135.72$ & $125.38$ & $124.12$ & $132.36$ & 
$132.45$ & $113.75$ \\
& C4 & $24.60$  & $45.80$ & $50.84$ & $70.44$ & $79.60$ & $81.48$ & $79.72$ & $79.67$ & $80.70$ & $79.54$\\

\hline
\multirow{2}{*}{\shortstack{LLaMA-$7$B \\w$2$a$16$}} & WikiText2 & $5.68$  & NaN & NaN & $9.53$ & $10.63$ & $10.90$ & $10.99$ & $10.72$ & $11.63$ & $11.67$\\
& C4 & $7.08$  & NaN & NaN & $14.89$ & $14.62$ & $15.31$ & $15.22$ & $14.78$ & $16.66$ & $17.33$\\

\bottomrule 
\end{tabular}
}

\label{tab:mask_value}
\end{table*}

\begin{table*}[t!]
\small
\centering
\caption{Contributions of gradual mask on OPT-$125$M and LLaMA-$7$B model.}
\setlength{\tabcolsep}{1.0mm}{
\begin{tabular}{ccccc|cccc}
\toprule  
 &  Scheme & WikiText2 & PTB &
C4 &
& Scheme&
WikiText2 &
C4 \\
\hline 
\multirow{3}{*}{\shortstack{OPT-$125$M \\w$3$a$16$}} & FP16 & $27.65$  & $32.54$ & $24.60$ & \multirow{3}{*}{\shortstack{LLaMA-$7$B \\w$2$a$16$}} & FP16 & $5.68$ & $7.08$ \\
 & With Gradual & $32.10$  & $39.85$ & $29.97$ &  & With Gradual & $9.53$ & $14.89$ \\
& Without Gradual & $53.52$  & $90.47$ & $62.17$ &  & Without Gradual & NaN & NaN \\
\bottomrule 
\end{tabular}
}

\label{tab:w/o_mask}
\end{table*}

\noindent\textbf{Impact of numerical precision.}
We compare various metrics, including merge error, PPL, memory usage, and optimization runtime, for different precision schemes. Specifically, in the ``float-double" scheme, we convert the activations or weights to double precision, multiply them with the transformation matrix, and then truncate them to float precision.
For the merge error, we define two linear layers with input and output channels set to $4,096$. We randomly sample the affine transformation matrix $A \in \mathbb{R}^{4096 \times 4096}$ and the input activation $X \in \mathbb{R}^{2048 \times 4096}$.
We conduct $1,000$ runs to calculate the mean square error averages of the linear layer outputs before and after merging $A$ for different precision schemes. The results are presented in Table~\ref{tab:numerical precision}. Notably, in the case of double precision optimization, the computational error of the matrix inverse is minimized.
Despite the higher time and memory usage compared to other schemes, the smaller merge error results in a minor improvement in PPL. However, this improvement is not significant for larger-scale models.

\noindent\textbf{Effects of stability factor.} 
In Table~\ref{tab:mask_value}, we adjust the stability factor $\alpha$ in Equation~\ref{eq:gradual_mask}. The affine transform theoretically converges to the scale transform as $\alpha$ approaches $0$. We observe that as $\alpha$ decreases, the model performance of OPT-$125$M and LLaMA-$7$B converges to OmniQuant \citep{shao2023omniquant}. However, in the case of LLaMA-$7$B, a larger stability factor does not guarantee the strictly diagonal dominance of the transformation matrix, which can lead to training collapse. Hence, it is necessary to increase $\alpha$ while ensuring the validity of the Levy-Desplanques theorem \citep{naimark1997extension}.

\noindent\textbf{Contribution of gradual mask.} 
In Equation~\ref{eq:gradual_mask}, we gradually release the elements of the mask matrix close to the diagonal. In Table~\ref{tab:w/o_mask}, when we remove the gradual approach, the OPT-$125$M and LLaMA-$7$B models exhibit poor performance or fail to complete training. This indicates that updating all parameters of the affine transformation matrix at the outset is not conducive to maintaining its invertible or well-conditioned properties. The gradual mask approach provides a stable means to optimize large matrices.

\section{Conclusion}

Post-training quantization based on equivalence shows significant potential. However, previous equivalence methods have limited the optimizable weight space, resulting in a large quantization error in the transformed weight distribution. This limitation becomes more pronounced in the case of small models and low-bit quantization. Affine transformation methods address this issue by significantly expanding the optimizable weight space. Previous transformation methods can be seen as a special case of affine transformation or orthogonal to it.
Moreover, the Levy-Desplanques theorem provides a theoretical foundation for maintaining the stability of matrix optimization in high-dimensional spaces. Following this theorem, our proposed gradual mask ensures that the matrix remains strictly diagonally dominant during the optimization process. This guarantees the matrix's invertibility or well-conditioned property and further reduces the mean square error objective function. Our approach consistently improves performance across a wide range of quantization configurations for all models.
Notably, the affine transformation method demonstrates great potential for improving performance, particularly for small models and low-bit configurations.
In the future, optimizing the affine transformation matrix more effectively deserves careful consideration.


\section*{Acknowledgments}
This work was supported by National Science and Technology Major Project (No. 2022ZD0118202), the National Science Fund for Distinguished Young Scholars (No.62025603), the National Natural Science Foundation of China (No. U21B2037, No. U22B2051, No. 62176222, No. 62176223, No. 62176226, No. 62072386, No. 62072387, No. 62072389, No. 62002305 and No. 62272401), and the Natural Science Foundation of Fujian Province of China (No.2021J01002,  No.2022J06001).

\bibliography{iclr2024_conference}
\bibliographystyle{iclr2024_conference}

\appendix
\section{Appendix}


\subsection{Pareto Fronts based on Weighted Memory}
\label{Pareto Fronts based on Weighted Memory}

In Figure~\ref{fig:Pareto_Fronts}, We show the Pareto frontiers of AffineQuant and OmniQuant based on weighted memory and PPL trade-off for LLaMA1\&2 models of different sizes in the 4/4 bit quantization configuration. The results clearly demonstrate that AffineQuant consistently outperforms the current State-Of-The-Art method, OmniQuant, without any additional overhead.

\begin{figure}[h]
\begin{center}
\includegraphics[width=1.0\linewidth]{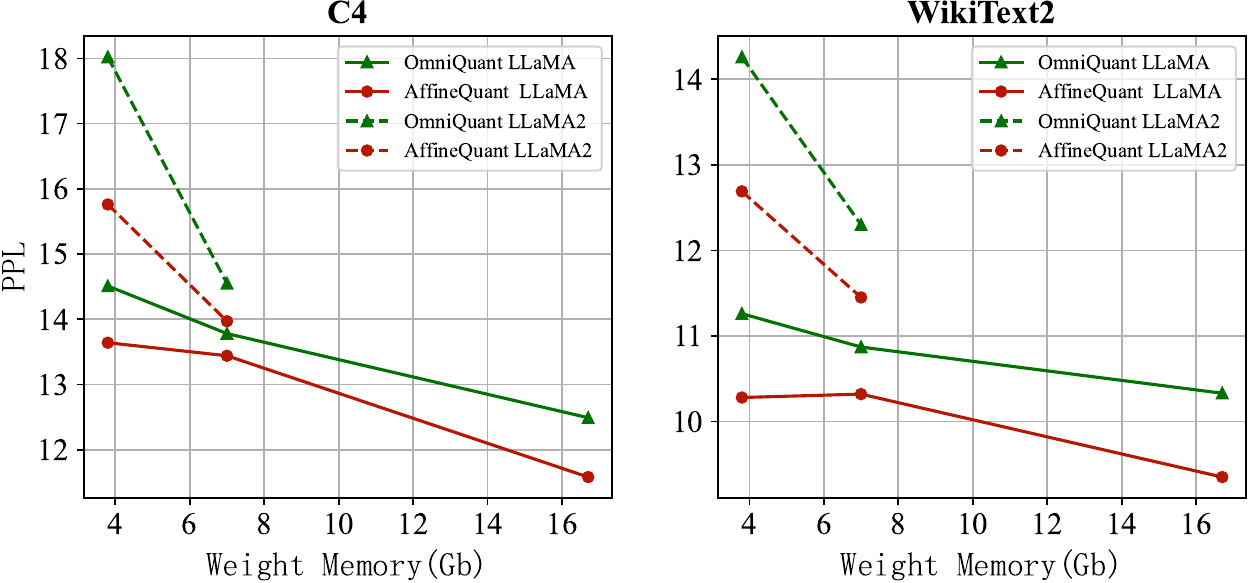}
\end{center}
\caption{PPL vs. weight-memory Pareto-optimal curves for LLaMA1\&2 models of different sizes in the $4$/$4$ bit quantization configuration on C4 and WikiText2.}
\label{fig:Pareto_Fronts}
\end{figure}

\subsection{Strictly diagonal dominance guaranteed}
\label{Strictly diagonal dominance guaranteed}

To avoid confusion in the proof between $\alpha$ and the elements in the matrix $A$, we temporarily denote the affine matrix $A$ as $N$.

\begin{theorem}
\label{the:strictly_dd}
When the stability factor $\alpha$ is small enough, if $N_{e}$ is strictly diagonally dominant, then $N_{e+1}$ is strictly diagonally dominant.
\end{theorem}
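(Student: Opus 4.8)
The plan is to track how one gradient step changes the diagonal dominance gap of the matrix. For each row $i$, define the dominance margin $m_i(N_e) = |(N_e)_{ii}| - \sum_{j \neq i} |(N_e)_{ij}|$, which is strictly positive by the hypothesis that $N_e$ is strictly diagonally dominant. We want to show $m_i(N_{e+1}) > 0$ for all $i$. From the update rule (Equations~\ref{eq:definition2_1}--\ref{eq:definition3_1}), the off-diagonal entries of $N_{e+1}$ differ from those of $N_e$ by $\eta \, \alpha \, (\partial L / \partial A^*_e)_{ij}$ for $0 < |i-j| \le \frac{e}{t} \times hidden\ size$ (these are the currently unmasked off-diagonal entries; the rest are frozen and contribute zero change), while the diagonal entry $(N_{e+1})_{ii}$ differs from $(N_e)_{ii}$ by $\eta \, (\partial L / \partial A^*_e)_{ii}$ since $GM_{ii} = 1$.

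\textbf{Key steps.} First I would bound the change in each term of the margin using the triangle inequality: $|(N_{e+1})_{ii}| \ge |(N_e)_{ii}| - \eta \, |(\partial L / \partial A^*_e)_{ii}|$ and $|(N_{e+1})_{ij}| \le |(N_e)_{ij}| + \eta \, \alpha \, |(\partial L / \partial A^*_e)_{ij}|$. Summing the off-diagonal bound over $j \neq i$ and combining gives
\begin{equation}\label{eq:margin_bound}
	m_i(N_{e+1}) \ge m_i(N_e) - \eta \, |(\partial L / \partial A^*_e)_{ii}| - \eta \, \alpha \sum_{j \neq i} |(\partial L / \partial A^*_e)_{ij}|.
\end{equation}
The second step is to handle the diagonal gradient term $\eta \, |(\partial L/\partial A^*_e)_{ii}|$, which does \emph{not} carry an $\alpha$ factor. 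Here I would invoke the choice of a small learning rate $\eta$ (as the paper remarks that ``second-order momentum and lower learning rates in the optimizer can assist'') — or, more cleanly, observe that the diagonal entries can themselves be excluded from the learnable set or updated with a separately controlled step, so that the diagonal perturbation is $o(m_i(N_e))$. The third step: assuming the gradient entries are bounded (which holds over a compact calibration set with the smooth quantization-error surrogate loss), choose $\alpha$ small enough that $\eta \, \alpha \sum_{j \neq i} |(\partial L / \partial A^*_e)_{ij}| < \frac{1}{2} m_i(N_e)$ simultaneously for all $i$ (finitely many rows, each margin strictly positive, so a uniform $\alpha$ exists). Combined with the diagonal term being controlled, the right-hand side of \eqref{eq:margin_bound} stays positive, giving $m_i(N_{e+1}) > 0$, i.e. strict diagonal dominance of $N_{e+1}$.

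\textbf{Main obstacle.} The delicate point is the diagonal gradient term in \eqref{eq:margin_bound}: since $GM_{ii} = 1$ always, shrinking $\alpha$ does nothing to suppress it, so a naive argument that ``small $\alpha$ suffices'' is incomplete. The honest fix is to lean on the learning-rate smallness (treating $\eta$ as jointly small with $\alpha$, consistent with the paper's optimizer remarks) or to note that in the weight-activation setting the paper already only optimizes diagonal elements after LayerNorm while the large affine blocks have their diagonal essentially pinned near $1$ by initialization; either way the diagonal drift is second-order relative to the margin. A secondary obstacle is justifying a uniform gradient bound across epochs — I would address this by restricting to the compact parameter region where $N_e$ remains strictly diagonally dominant (an inductive invariant) and using continuity of $\partial L / \partial A^*$ there.
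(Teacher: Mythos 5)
There is a genuine gap, and you actually put your finger on it yourself: in your one-step bound the diagonal drift $\eta\,|(\partial L/\partial A^*_e)_{ii}|$ carries no $\alpha$ factor, so shrinking $\alpha$ cannot prevent cumulative erosion of the margin over many epochs. Your proposed fixes (take $\eta$ small jointly with $\alpha$, or pin the diagonal) quietly change the theorem's hypothesis, which is stated purely in terms of $\alpha$.

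What you missed is the structural fact the paper's proof leans on: when an off-diagonal entry $n_{ij}$ is first unfrozen at some epoch $h$, it starts from \emph{exactly zero}, and every subsequent update to it is scaled by $GM_{ij}=\alpha$. Unrolling the recursion all the way back to $h$ therefore gives
\begin{equation}
n_{ij}^{e+1} \;=\; \eta\,\alpha \sum_{x=h}^{e} \frac{\partial L^{x}}{\partial n_{ij}^{x*}},
\end{equation}
so the \emph{entire} off-diagonal row sum $\sum_{j\neq i}|n_{ij}^{e+1}|$ — not merely its one-step increment — is $\eta\alpha$ times an accumulated gradient magnitude, i.e.\ it is $O(\alpha)$. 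The diagonal entry, by the same unrolling, is $|n_{ii}^{0}+\eta\sum_{x=0}^{e}\partial L^{x}/\partial n_{ii}^{x*}|$, which the paper asserts stays bounded away from zero (relying on the nonzero diagonal initialization $n_{ii}^{0}$). Comparing these two quantities directly yields an explicit per-epoch threshold on $\alpha$ under which strict diagonal dominance is preserved, with no cumulative-erosion issue at all. Your decomposition into margin increments discards precisely the information (off-diagonals born at zero, and every update to them $\alpha$-scaled) that makes the claim provable with ``small $\alpha$'' alone; switch to the unrolled form of the recursion and the obstacle you flagged disappears. (Note the paper does carry its own implicit assumption — that the accumulated diagonal gradient never cancels $n_{ii}^{0}$ — but that is a separate, milder gap than the one in your argument.)
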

\begin{proof}
Without loss of generality, we take the $i$-th row of $N_{e}$. Since $N_{e}$ is a strictly diagonally dominant matrix, we have,
\begin{equation}
\label{eq:strictly_dd_N}
    |n_{ii}^{e}|>\sum_{j\neq i}|n_{ij}^{e}|.
\end{equation}
Where $n_{ii}^{e}$, $n_{ij}^{e}$ are the elements of the epoch $e$ in the $i$-th row, $i$-th column and $i$-th row, $j$-th column of the matrix $N$. According to the above Equation~\ref{eq:definition3_1}, the absolute value of the $i$-th diagonal element of the $e$+1 epoch of matrix $N$ is,
\begin{align}
\label{eq:epoch_e+1_diag}
|n_{ii}^{e+1}|&=|n_{ii}^{e}+\eta g_{ii}^{e} \frac{\partial L^{e}}{\partial n_{ii}^{e*}}|,  \\
&=|n_{ii}^{e}+\eta \frac{\partial L^{e}}{\partial n_{ii}^{e*}}|. 
\end{align}
Where $g_{ii}^{e}=1$, $n_{ii}^{e*}$ are the $i$-th diagonal elements of $GM$, $N_{e}^{*}$ at epoch e, respectively. $L^{e}$ is the loss at epoch $e$. Further,
\begin{equation}
\label{eq:epoch_e+1_diag_deduce}
    |n_{ii}^{e+1}|=|n_{ii}^{0}+\eta \sum_{x=0}^e \frac{\partial L^{x}}{\partial n_{ii}^{x*}}|.
\end{equation}
$n_{ii}^{0}$ is the scale when the matrix is initialized. Therefore, the diagonal values of the matrix $N$ are not equal to $0$ during the optimization process. Next, we focus on the right-hand side of Equation~\ref{eq:strictly_dd_N} at epoch $e$+1. Similarly, 
\begin{align}
\label{eq:epoch_e+1_nodiag}
\sum_{j\neq i} |n_{ij}^{e+1}|&=\sum_{j \neq i} |n_{ij}^{e} + \eta g_{ii}^{e}\frac{\partial L^{e}}{\partial n_{ij}^{e*}}|,   \\
&=\sum_{j \neq i}|n_{ij}^{h}+\eta\sum_{x=h}^{e} g_{ii}^{x}\frac{\partial L^{x}}{\partial n_{ij}^{x*}}|. 
\end{align}
Where $1\leq h \leq e$ is the epoch at which $n_{ij}$ starts updating. In other words, $n_{ij}^{h}=0$, and as $h$ gets smaller $n_{ij}$ gets closer to the diagonal. In addition, $g_{ii}^{x}=\alpha$. Therefore, we have,
\begin{align}
\label{eq:epoch_e+1_nodiag_deduce}
\sum_{j\neq i} |n_{ij}^{e+1}|&= \eta\alpha\sum_{j \neq i}|\sum_{x=h}^{e} \frac{\partial L^{x}}{\partial n_{ij}^{x*}}|. 
\end{align}
To make $\sum_{j\neq i} |n_{ij}^{e+1}| < |n_{ii}^{e+1}|$ , we let
\begin{align}
    \eta\alpha\sum_{j \neq i}|\sum_{x=h}^{e} \frac{\partial L^{x}}{\partial n_{ij}^{x*}}|&< |n_{ii}^{0}+\eta \sum_{x=0}^e \frac{\partial L^{x}}{\partial n_{ii}^{x*}}|  \\
\alpha&<\frac{|n_{ii}^{0}+\eta \sum_{x=0}^e \frac{\partial L^{x}}{\partial n_{ii}^{x*}}|}{\eta\sum_{j \neq i}|\sum_{x=h}^{e} \frac{\partial L^{x}}{\partial n_{ij}^{x*}}|}
\end{align}
Thus, when the stability factor $\alpha$ is sufficiently small, if $N_{e}$ is a strictly diagonally dominant matrix, then $N_{e+1}$ is a strictly diagonally dominant matrix. The theorem is proved.

\rightline{$\square$}
\end{proof}

\subsection{Comparison with FlexRound}
\label{Comparison with FlexRound}
Please refer to Table~\ref{tab:flexround_compare}.

\begin{table*}[h]
\small
\centering
\caption{AffineQuant vs. FlexRound. We perform accuracy comparisons on $6$ zero-shot tasks.}
\setlength{\tabcolsep}{1.0mm}{
\begin{tabular}{c|l|ccccccc}
\toprule  
 &  \multirow{2}{*}{Dataset}& PIQA & ARC-e & WinoGrande&
BoolQ &
ARC-c&
HellaSwag &
Avg.  \\
&&($\uparrow$)&($\uparrow$)&($\uparrow$)&($\uparrow$)&($\uparrow$)&($\uparrow$)&($\uparrow$)\\
\hline

\multirow{3}{*}{\shortstack{LLaMA-$7$B \\w$4$a$16$}}  & FP16 & $77.37$  & $52.52$ & $66.85$ & $73.12$ & $41.38$ & $72.99$ & $64.04$ \\
& FlexRound~\cite{lee2023flexround} & $77.75$  & $50.80$ & $66.06$ & $70.73$ & $40.27$ & $71.97$ & $62.93$  \\
& AffineQuant & $77.53$  & $51.85$ & $66.93$ & $70.89$ & $38.65$ & $71.49$ & $62.89$  \\

\hline
\multirow{3}{*}{\shortstack{LLaMA-$13$B \\w$4$a$16$}} & FP16 & $79.11$  & $59.89$ & $70.01$ & $68.53$ & $44.54$ & $76.23$ & $66.38$ \\
& FlexRound~\cite{lee2023flexround} & $78.78$  & $59.55$ & $70.40$ & $66.39$ & $43.77$ & $75.52$ & $65.73$ \\
& AffineQuant & $78.84$  & $59.55$ & $69.38$ & $69.48$ & $43.52$ & $75.18$ & $65.99$  \\

\bottomrule 
\end{tabular}
}

\label{tab:flexround_compare}
\end{table*}

\subsection{Loss and Model Performance}
\label{Loss and Model Performance}

We maintain consistent matrix initialization while randomly sampling the stability factor $\alpha$, which influences loss convergence, for LLaMA-7B and OPT-6.7B. Using AffineQuant, we obtain the performance of 4/4 bit quantized models based on the sampled solution. In Figure~\ref{fig:affine_matrix1}, \ref{fig:affine_matrix2}, we present scatter plots depicting the output loss of the last transformer block and the corresponding model performance on different datasets. These plots demonstrate a significant positive correlation between loss and model performance, with correlation coefficients of 0.95,0.96 on OPT-6.7B and LLaMA-7B in WikiText2, respectively. Based on this observation, we conclude that the quantization loss of the last transformer block's output exhibits a strong correlation with overall model performance.

\begin{figure}[h]
\begin{center}
\includegraphics[width=1.0\linewidth]{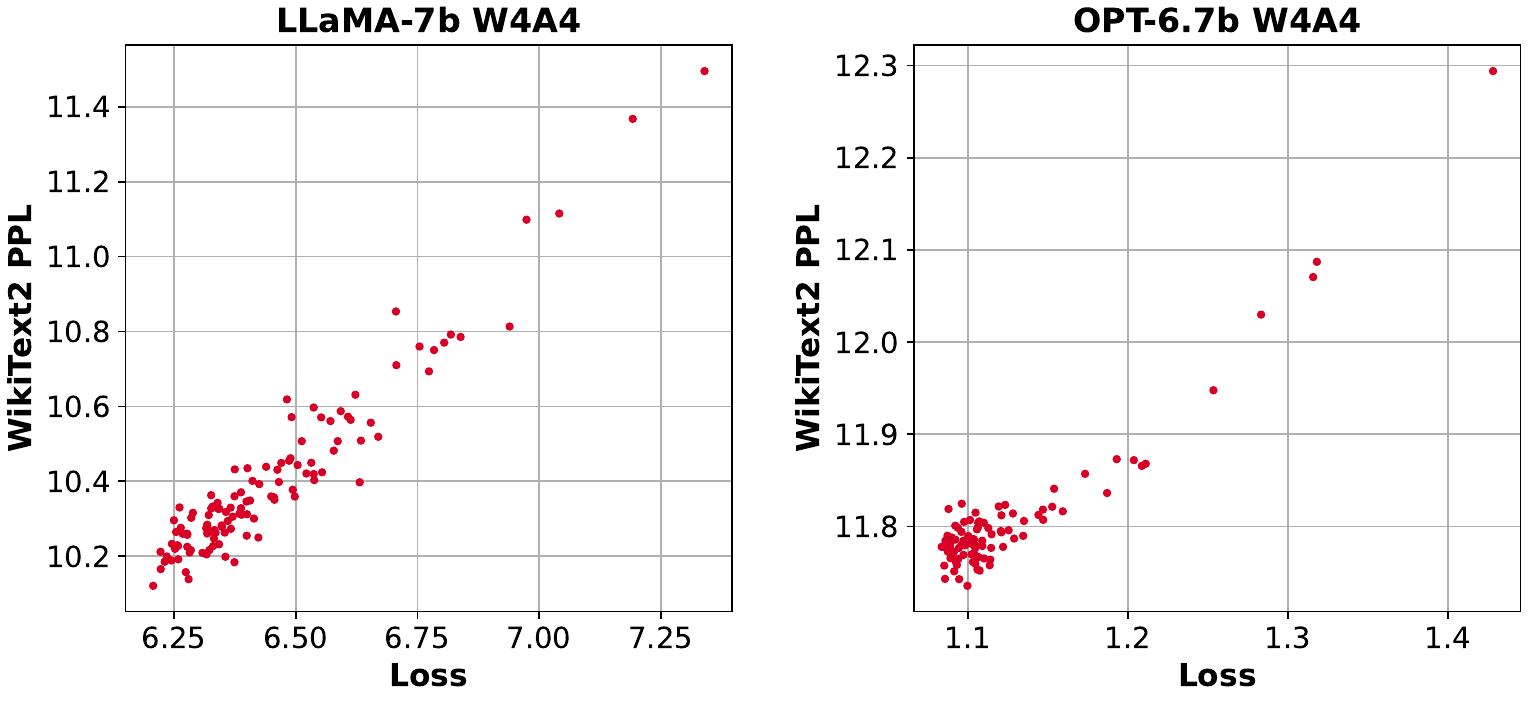}
\end{center}
\caption{The relationship between WikiText2 PPL and quantization loss of last transformer block on LLaMA-7B and OPT-6.7B with 4/4 bit quantization.}
\label{fig:affine_matrix1}
\end{figure}

\begin{figure}[h]
\begin{center}
\includegraphics[width=1.0\linewidth]{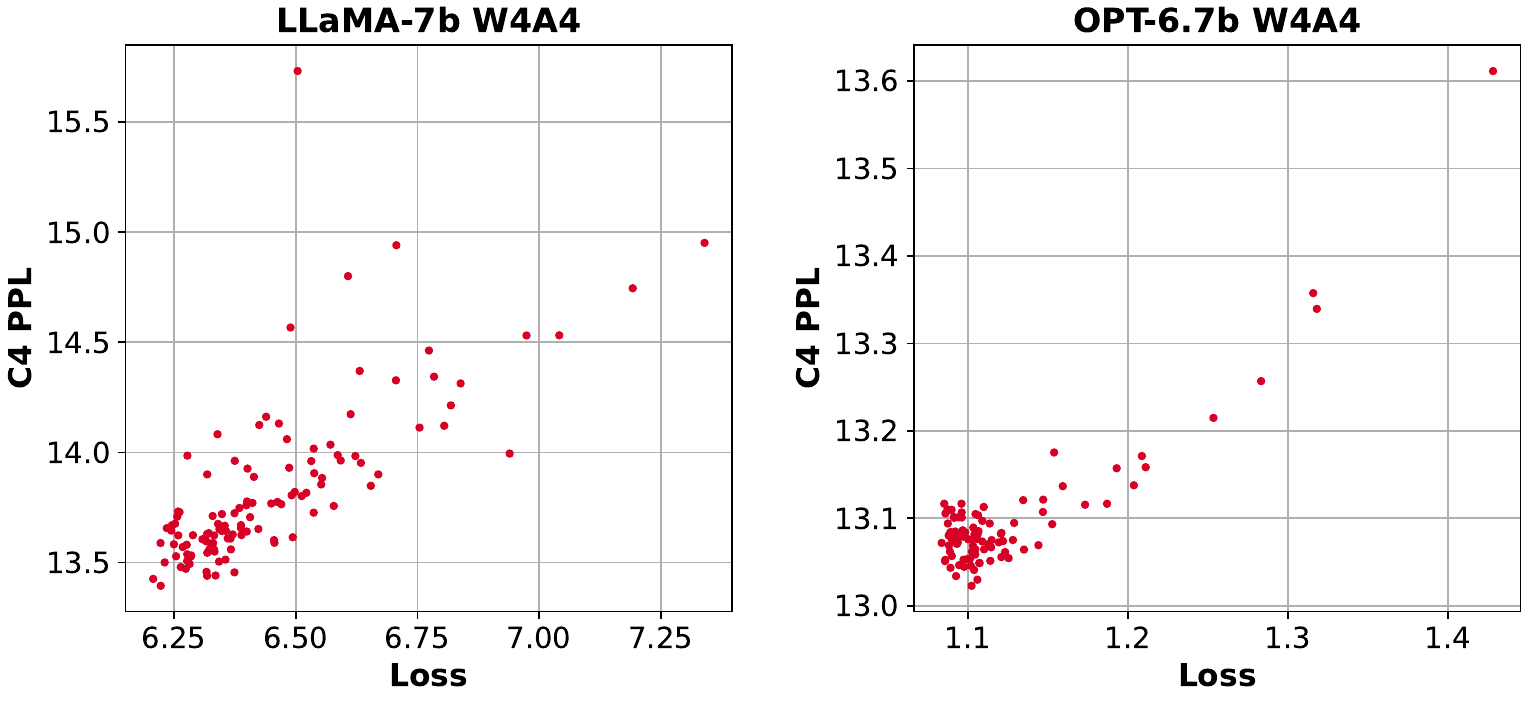}
\end{center}
\caption{The relationship between C4 PPL and quantization loss of last transformer block on LLaMA-7B and OPT-6.7B with 4/4 bit quantization.}
\label{fig:affine_matrix2}
\end{figure}

\subsection{Additional Experiment}
\label{other_datasets}

We list additional experiments including:
\begin{itemize}
  \item [1.]the OPT model on the PTB dataset.
  \item [2.]OPT model on C4 dataset.
  \item [3.]LLaMA1\&2 on the WikiText2 dataset.
\end{itemize}
Each experiment included models at different scales and a wide range of quantitative configurations.

\begin{table*}[t!]
\small
\centering
\caption{Weight-only quantization PPL($\downarrow$) results on the OPT model PTB dataset.}
\setlength{\tabcolsep}{1.9mm}{
\begin{tabular}{clcccccc}
\toprule  
Config & 
Method &
$125$M&
$1.3$B&
$2.7$B &
$6.7$B &
$13$B &
$30$B \\
\midrule  

FP16 & - & $32.54$   & $16.96$ & $15.11$  & $13.08$ &  $12.33$ &  $11.84$ \\
\midrule  
\multirow{5}{*}{w2a16g128} & RTN & $4.6$e$3$  & $7.1$e$3$ & $2.5$e$4$ & $5.7$e$3$ & $3.0$e$4$ & $6.2$e$3$ \\
& GPTQ \citep{frantar2022gptq}& $655.17$  & $130.88$ & $61.36$ & $25.24$ & $20.46$ & $15.15$ \\
& AWQ \citep{lin2023awq}& $263.88$  & $71.87$ & $43.15$ & $19.49$ & $17.61$ & $14.92$\\
& OmniQuant \citep{shao2023omniquant}& $126.49$  & $34.33$ & $25.28$ & $18.92$ & $16.74$ & $14.51$ \\
& AffineQuant  & $65.23$  & $30.06$ & $27.11$ & $18.22$ & $16.35$ & $14.09$ \\
\midrule 

\multirow{5}{*}{w2a16g64} & RTN & $5.1$e$3$  & $19.4$e$3$ & $7.7$e$4$ & $6.1$e$3$ & $8.2$e$3$ & $4.1$e$3$ \\
& GPTQ \citep{frantar2022gptq}& $245.28$  & $55.61$ & $36.12$ & $19.45$ & $17.02$ & $14.05$ \\
& AWQ \citep{lin2023awq}& $143.18$  & $41.19$ & $25.08$ & $18.00$ & $15.83$ & $14.92$\\
& OmniQuant \citep{shao2023omniquant}& $112.10$  & $30.36$ & $22.63$ & $17.58$ & $15.70$ & $13.98$ \\
& AffineQuant  & $60.90$  & $27.21$ & $21.50$ & $17.07$ & $15.32$ & $13.68$ \\
\midrule 

\multirow{5}{*}{w3a16} & RTN & $1.2$e$3$  & $1.1$e$4$ & $1.0$e$4$ & $5.2$e$3$ & $3.6$e$3$ & $1.4$e$3$ \\
& GPTQ \citep{frantar2022gptq}& $34.05$  & $27.39$ & $15.94$ & $13.75$ & $13.71$ & $12.54$ \\
& AWQ \citep{lin2023awq}& $80.73$  & $33.20$ & $224.11$ & $18.46$ & $35.45$ & $66.68$\\
& OmniQuant \citep{shao2023omniquant}& $40.76$  & $19.06$ & $16.29$ & $13.77$ & $12.96$ & $12.19$ \\
& AffineQuant  & $38.38$  & $19.14$ & $16.32$ & $14.19$ & $13.54$ & $12.48$ \\
\midrule 

\multirow{5}{*}{w3a16g128} & RTN & $64.67$  & $222.13$ & $337.75$ & $39.90$ & $65.33$ & $34.27$ \\
& GPTQ \citep{frantar2022gptq}& $45.17$  & $19.90$ & $17.06$ & $14.24$ & $12.84$ & $12.54$ \\
& AWQ \citep{lin2023awq}& $44.07$  & $19.59$ & $16.52$ & $13.98$ & $12.87$ & $66.68$\\
& OmniQuant \citep{shao2023omniquant}& $45.29$  & $20.42$ & $17.08$ & $14.23$ & $13.49$ & $12.54$ \\
& AffineQuant  & $36.70$  & $18.64$ & $16.11$ & $13.59$ & $12.97$ & $12.14$ \\
\midrule 

\multirow{5}{*}{w4a16} & RTN & $44.98$  & $33.63$ & $22.23$ & $16.05$ & $15.40$ & $14.17$ \\
& GPTQ \citep{frantar2022gptq}& $37.75$  & $18.23$ & $15.94$ & $13.75$ & $12.58$ & $11.98$ \\
& AWQ \citep{lin2023awq}& $38.74$  & $18.35$ & $15.70$ & $13.59$ & $12.72$ & $12.06$\\
& OmniQuant \citep{shao2023omniquant}& $34.94$  & $17.80$ & $15.52$ & $13.41$ & $12.62$ & $11.95$ \\
& AffineQuant  & $34.29$  & $17.55$ & $15.49$ & $13.30$ & $12.54$ & $11.97$ \\
\midrule 

\multirow{5}{*}{w4a16g128} & RTN & $36.50$  & $33.63$ & $22.23$ & $16.05$ & $15.40$ & $14.17$ \\
& GPTQ \citep{frantar2022gptq}& $35.48$  & $17.41$ & $15.42$ & $13.21$ & $12.42$ & $11.89$ \\
& AWQ \citep{lin2023awq}& $34.95$  & $17.46$ & $15.33$ & $13.28$ & $12.46$ & $11.90$\\
& OmniQuant \citep{shao2023omniquant}& $34.28$  & $17.40$ & $15.28$ & $13.25$ & $12.46$ & $11.94$ \\
& AffineQuant  & $34.00$  & $17.33$ & $15.25$ & $13.27$ & $12.44$ & $11.94$ \\
\bottomrule 
\end{tabular}
}

\label{tab1:ptb_opt}
\end{table*}

\begin{table*}[t!]
\small
\centering
\caption{Weight-only quantization PPL($\downarrow$) results on the OPT model C4 dataset.}
\setlength{\tabcolsep}{1.9mm}{
\begin{tabular}{clcccccc}
\toprule  
Config & 
Method &
$125$M&
$1.3$B&
$2.7$B &
$6.7$B &
$13$B &
$30$B \\
\midrule  

FP16 & - & $24.60$   & $14.72$ & $13.16$  & $11.74$ &  $11.19$ &  $10.69$ \\
\midrule  
\multirow{5}{*}{w2a16g128} & RTN & $5.0$e$3$  & $7.7$e$3$ & $3.8$e$4$ & $5.2$e$3$ & $2.8$e$4$ & $6.5$e$3$ \\
& GPTQ \citep{frantar2022gptq}& $597.66$  & $60.88$ & $33.83$ & $18.55$ & $16.34$ & $12.89$ \\
& AWQ \citep{lin2023awq}& $168.35$  & $38.38$ & $26.41$ & $16.48$ & $14.73$ & $12.98$\\
& OmniQuant \citep{shao2023omniquant}& $80.10$  & $27.33$ & $21.11$ & $16.67$ & $14.92$ & $13.12$ \\
& AffineQuant  & $46.22$  & $23.28$ & $23.10$ & $15.62$ & $14.60$ & $12.93$ \\
\midrule 

\multirow{5}{*}{w2a16g64} & RTN & $3.9$e$3$  & $7.3$e$3$ & $1.2$e$5$ & $6.3$e$3$ & $7.5$e$3$ & $4.0$e$3$ \\
& GPTQ \citep{frantar2022gptq}& $133.51$  & $31.31$ & $23.23$ & $16.24$ & $14.48$ & $12.24$ \\
& AWQ \citep{lin2023awq}& $90.19$  & $27.34$ & $20.01$ & $15.20$ & $13.90$ & $12.43$\\
& OmniQuant \citep{shao2023omniquant}& $64.01$  & $23.71$ & $19.16$ & $15.44$ & $14.16$ & $12.80$ \\
& AffineQuant  & $42.43$  & $21.87$ & $17.72$ & $14.86$ & $13.92$ & $12.49$ \\
\midrule 

\multirow{5}{*}{w3a16} & RTN & $722.83$  & $6.1$e$3$ & $1.2$e$4$ & $5.8$e$3$ & $3.3$e$3$ & $1.4$e$3$ \\
& GPTQ \citep{frantar2022gptq}& $37.75$  & $19.45$ & $13.75$ & $15.67$ & $12.28$ & $11.34$ \\
& AWQ \citep{lin2023awq}& $55.73$  & $24.56$ & $154.49$ & $15.84$ & $23.71$ & $55.01$\\
& OmniQuant \citep{shao2023omniquant}& $32.17$  & $17.10$ & $14.93$ & $12.78$ & $12.13$ & $11.37$ \\
& AffineQuant  & $28.19$  & $16.42$ & $14.27$ & $12.72$ & $12.04$ & $11.21$ \\
\midrule 

\multirow{5}{*}{w3a16g128} & RTN & $40.13$  & $126.47$ & $372.23$ & $32.56$ & $44.12$ & $25.70$ \\
& GPTQ \citep{frantar2022gptq}& $30.08$  & $16.47$ & $14.54$ & $12.48$ & $11.58$ & $10.91$ \\
& AWQ \citep{lin2023awq}& $30.39$  & $16.27$ & $14.19$ & $12.30$ & $11.61$ & $10.96$\\
& OmniQuant \citep{shao2023omniquant}& $29.34$  & $16.11$ & $14.15$ & $12.31$ & $11.63$ & $10.98$ \\
& AffineQuant  & $27.53$  & $16.02$ & $13.92$ & $12.21$ & $11.63$ & $10.99$ \\
\midrule 

\multirow{5}{*}{w4a16} & RTN & $31.58$  & $24.68$ & $17.61$ & $13.38$ & $12.35$ & $11.90$ \\
& GPTQ \citep{frantar2022gptq}& $27.12$  & $15.57$ & $13.75$ & $12.15$ & $11.36$ & $10.80$ \\
& AWQ \citep{lin2023awq}& $27.64$  & $15.65$ & $13.71$ & $12.04$ & $11.42$ & $10.83$\\
& OmniQuant \citep{shao2023omniquant}& $26.36$  & $15.28$ & $13.58$ & $11.97$ & $11.41$ & $10.80$ \\
& AffineQuant  & $25.47$  & $15.18$ & $13.43$ & $11.90$ & $11.36$ & $10.80$ \\
\midrule 

\multirow{5}{*}{w4a16g128} & RTN & $26.79$  & $15.71$ & $13.79$ & $12.31$ & $11.51$ & $10.94$ \\
& GPTQ \citep{frantar2022gptq}& $25.96$  & $15.05$ & $13.40$ & $11.87$ & $11.26$ & $10.74$ \\
& AWQ \citep{lin2023awq}& $25.90$  & $15.04$ & $13.39$ & $11.87$ & $11.28$ & $10.75$\\
& OmniQuant \citep{shao2023omniquant}& $25.63$  & $15.03$ & $13.38$ & $11.85$ & $11.29$ & $10.75$ \\
& AffineQuant  & $25.26$  & $14.98$ & $13.32$ & $11.84$ & $11.27$ & $10.75$ \\
\bottomrule 
\end{tabular}
}

\label{tab1:c4_opt}
\end{table*}

\begin{table*}[t!]
\small
\centering
\caption{Weight-only quantization PPL($\downarrow$) results on the LLaMA$1$\&$2$ model C$4$ dataset.}
\setlength{\tabcolsep}{1.9mm}{
\begin{tabular}{clccccc}
\toprule  
Config & 
Method &
1-7B&
1-13B&
1-30B &
2-7B &
2-13B \\
\midrule  

FP16 & - & $7.08$   & $6.61$ & $5.98$  & $6.97$ &  $6.46$  \\


\midrule 

\multirow{5}{*}{w3a16} & RTN & $28.26$  & $13.22$ & $28.66$ & $402.35$ & $12.51$  \\
& GPTQ \citep{frantar2022gptq}& $9.49$  & $8.16$ & $7.29$ & $9.81$ & $8.02$ \\
& AWQ \citep{lin2023awq}& $13.26$  & $9.13$ & $12.67$ & $23.85$ & $13.07$ \\
& OmniQuant \citep{shao2023omniquant}& $8.19$  & $7.32$ & $6.57$ & $8.65$ & $7.44$ \\
& AffineQuant  & $8.03$  & $7.20$ & $6.55$ & $8.57$ & $7.56$  \\
\midrule 

\multirow{5}{*}{w3a16g128} & RTN & $8.62$  & $7.49$ & $6.58$ & $8.40$ & $7.18$  \\
& GPTQ \citep{frantar2022gptq}& $7.85$  & $7.10$ & $6.47$ & $7.89$ & $7.00$  \\
& AWQ \citep{lin2023awq}& $7.92$  & $7.07$ & $6.37$ & $7.84$ & $6.94$ \\
& OmniQuant \citep{shao2023omniquant}& $7.34$  & $6.76$ & $6.11$ & $7.35$ & $6.65$  \\
& AffineQuant  & $7.75$  & $7.04$ & $6.40$ & $7.83$ & $6.99$  \\
\midrule 

\multirow{5}{*}{w4a16} & RTN & $7.93$  & $6.98$ & $6.34$ & $7.71$ & $6.83$  \\
& GPTQ \citep{frantar2022gptq}& $7.43$  & $6.84$ & $6.20$ & $7.37$ & $6.70$  \\
& AWQ \citep{lin2023awq}& $7.52$  & $6.86$ & $6.17$ & $7.68$ & $6.74$ \\
& OmniQuant \citep{shao2023omniquant}& $7.34$  & $6.76$ & $6.11$ & $7.35$ & $6.65$  \\
& AffineQuant  & $7.30$  & $6.75$ & $6.10$ & $7.29$ & $6.64$  \\
\midrule 

\multirow{5}{*}{w4a16g128} & RTN & $7.37$  & $6.69$ & $6.06$ & $7.24$ & $6.58$  \\
& GPTQ \citep{frantar2022gptq}& $7.21$  & $6.69$ & $6.06$ & $7.12$ & $6.56$  \\
& AWQ \citep{lin2023awq}& $7.21$  & $6.70$ & $6.05$ & $7.13$ & $6.56$ \\
& OmniQuant \citep{shao2023omniquant}& $7.21$  & $6.69$ & $6.06$ & $7.12$ & $6.56$  \\
& AffineQuant  & $7.20$  & $6.69$ & $6.05$ & $7.12$ & $6.56$  \\
\bottomrule 
\end{tabular}
}

\label{tab:PTQ}
\end{table*}

\begin{table*}[t!]
\small
\centering
\caption{Weight-only quantization PPL($\downarrow$) results on the LLaMA$1$\&$2$ model WikiText$2$ dataset.}
\setlength{\tabcolsep}{1.9mm}{
\begin{tabular}{clccccc}
\toprule  
Config & 
Method &
1-7B&
1-13B&
1-30B &
2-7B &
2-13B \\
\midrule  

FP16 & - & $5.68$ & $5.09$ & $4.10$  & $5.47$ &  $4.88$  \\
\midrule  
\multirow{4}{*}{w2a16} & RTN & $1.1$e$5$  & $6.8$e$4$ & $2.4$e$4$ & $3.8$e$4$ & $5.6$e$4$  \\
& GPTQ \citep{frantar2022gptq}& $2.1$e$3$  & $5.5$e$3$ & $499.75$ & $7.7$e$3$ & $2.1$e$3$  \\
& OmniQuant \citep{shao2023omniquant}& $15.47$  & $13.21$ & $8.71$ & $37.37$ & $17.21$  \\
& AffineQuant  & $9.53$  & $7.54$ & $8.35$ & $35.07$ & $12.42$  \\

\midrule  
\multirow{5}{*}{w2a16g128} & RTN & $1.9$e$3$  & $781.20$ & $68.04$ & $4.2$e$3$ & $122.08$  \\
& GPTQ \citep{frantar2022gptq}& $44.01$  & $15.60$ & $10.92$ & $36.77$ & $28.14$  \\
& AWQ \citep{lin2023awq}& $2.6$e$5$  & $2.8$e$5$ & $2.4$e$5$ & $2.2$e$5$ & $1.2$e$5$ \\
& OmniQuant \citep{shao2023omniquant}& $10.53$  & $8.37$ & $7.77$ & $12.84$ & $9.15$  \\
& AffineQuant  & $13.51$  & $7.22$ & $6.49$ & $10.87$ & $7.64$  \\
\midrule 

\multirow{5}{*}{w2a16g64} & RTN & $188.32$  & $101.87$ & $19.20$ & $431.97$ & $26.22$  \\
& GPTQ \citep{frantar2022gptq}& $22.10$  & $10.06$ & $8.54$ & $20.85$ & $22.44$  \\
& AWQ \citep{lin2023awq}& $2.5$e$5$  & $2.7$e$5$ & $2.3$e$5$ & $2.1$e$5$ & $1.2$e$5$ \\
& OmniQuant \citep{shao2023omniquant}& $9.41$  & $7.62$ & $7.14$ & $10.56$ & $8.14$  \\
& AffineQuant  & $8.35$  & $6.98$ & $6.20$ & $9.05$ & $7.11$  \\
\midrule 

\multirow{5}{*}{w3a16} & RTN & $25.73$  & $11.39$ & $14.95$ & $539.48$ & $10.68$  \\
& GPTQ \citep{frantar2022gptq}& $8.06$  & $6.76$ & $5.84$ & $8.37$ & $6.44$ \\
& AWQ \citep{lin2023awq}& $11.88$  & $7.45$ & $10.07$ & $24.00$ & $10.45$ \\
& OmniQuant \citep{shao2023omniquant}& $6.49$  & $5.68$ & $4.74$ & $6.58$ & $5.58$ \\
& AffineQuant  & $6.30$  & $5.60$ & $4.68$ & $6.55$ & $5.62$  \\
\midrule 

\multirow{5}{*}{w3a16g128} & RTN & $7.01$  & $5.88$ & $4.87$ & $6.66$ & $5.51$  \\
& GPTQ \citep{frantar2022gptq}& $6.55$  & $5.62$ & $4.80$ & $6.29$ & $5.42$  \\
& AWQ \citep{lin2023awq}& $6.46$  & $5.51$ & $4.63$ & $6.24$ & $5.32$ \\
& OmniQuant \citep{shao2023omniquant}& $6.15$  & $5.44$ & $4.56$ & $6.03$ & $5.28$  \\
& AffineQuant  & $6.14$  & $5.45$ & $4.59$ & $6.08$ & $5.28$  \\
\midrule 

\multirow{5}{*}{w4a16} & RTN & $6.43$  & $5.55$ & $4.57$ & $6.11$ & $5.20$  \\
& GPTQ \citep{frantar2022gptq}& $6.13$  & $5.40$ & $4.48$ & $5.83$ & $5.13$  \\
& AWQ \citep{lin2023awq}& $6.08$  & $5.34$ & $4.39$ & $6.15$ & $5.12$ \\
& OmniQuant \citep{shao2023omniquant}& $5.86$  & $5.21$ & $4.25$ & $5.74$ & $5.02$  \\
& AffineQuant  & $5.84$  & $5.20$ & $4.23$ & $5.69$ & $5.01$  \\
\midrule 

\multirow{5}{*}{w4a16g128} & RTN & $5.96$  & $5.25$ & $4.23$ & $5.72$ & $4.98$  \\
& GPTQ \citep{frantar2022gptq}& $5.85$  & $5.20$ & $4.23$ & $5.61$ & $4.98$  \\
& AWQ \citep{lin2023awq}& $5.81$  & $5.20$ & $4.21$ & $5.62$ & $4.97$ \\
& OmniQuant \citep{shao2023omniquant}& $5.77$  & $5.17$ & $4.19$ & $5.58$ & $4.95$  \\
& AffineQuant  & $5.77$  & $5.17$ & $4.19$ & $5.58$ & $4.95$  \\
\bottomrule 
\end{tabular}
}

\label{tab:wikitext_llama}
\end{table*}

\subsection{Affine Matrix}
\label{heatmap}

Figure~\ref{fig:affine_matrix} presents a comprehensive collection of affine transformation matrices, encompassing various transformer block locations, training epochs, layers, and quantization configurations. ``fc1\_Affine\_Matrix\_A" denotes the affine transformation matrix at fc1, ``out\_Affine\_Matrix\_A" represents the affine transformation matrix at out\_proj, and ``qkv\_Affine\_Matrix\_A" corresponds to the affine transformation matrix at qkv. To ensure consistency, we normalize the matrix values within the range of $0$ to $1$ using a specified normalization method. Notably, all matrices exhibit the property of being strictly diagonally dominant. Additionally, the low-bit affine transformation matrix demonstrates a higher capability to learn rotational features, thereby reducing the model's quantization error compared to the high-bit configuration.

Furthermore, as the training epochs progress, the affine transformation matrix acquires more non-primary diagonal elements. On the other hand, the persistence of an approximate diagonal matrix at high quantization bits elucidates the modest performance improvement observed in high-bit quantization configurations. This phenomenon may also be attributed to the relatively small performance gap between the quantized model and the full-precision model.

\begin{figure}[t]
\begin{center}
\includegraphics[width=1.0\linewidth]{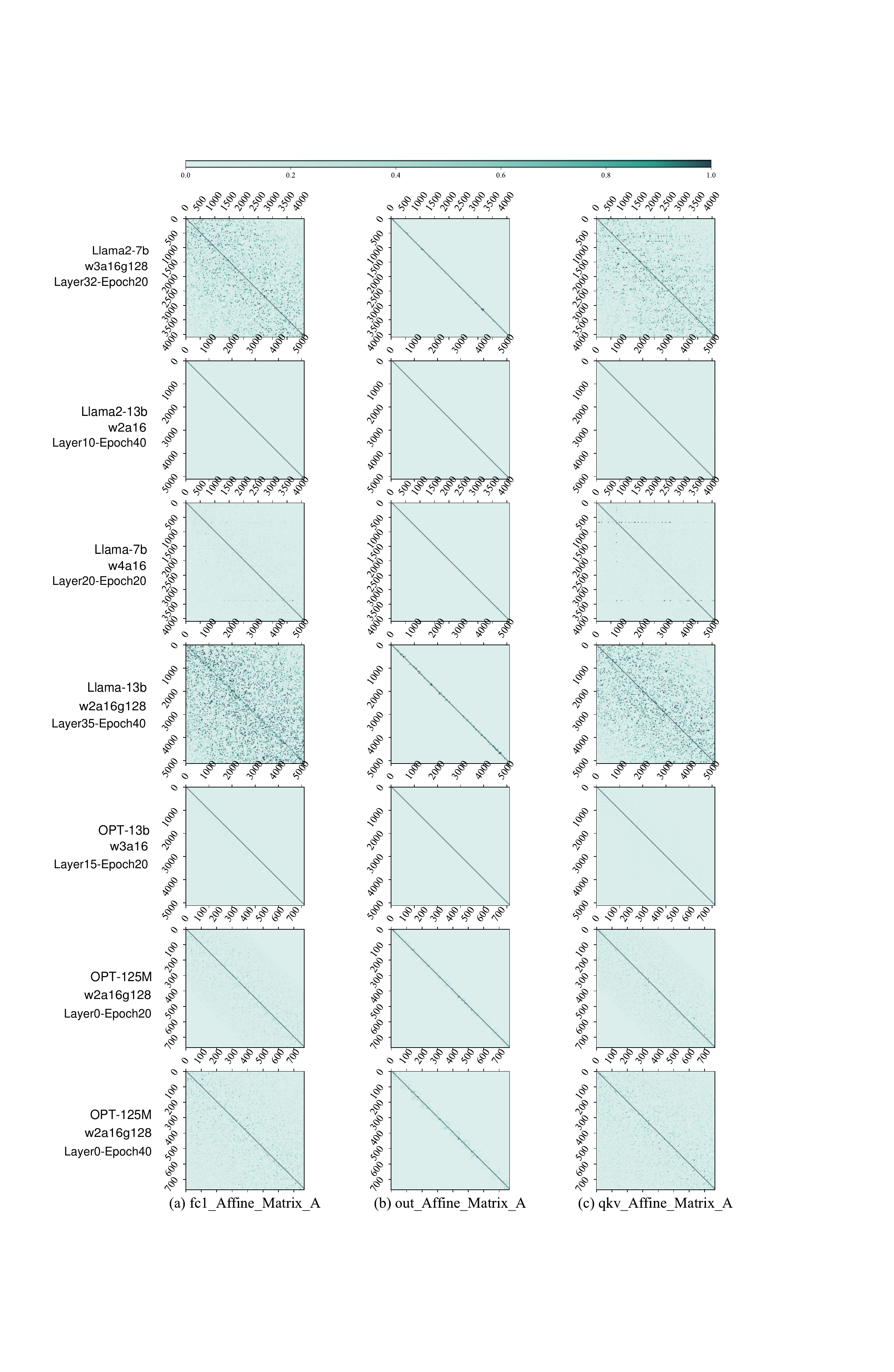}
\end{center}
\caption{Affine transformation matrix for different quantization configurations, different layers, and different training epochs for OPT and LLaMA1$\&$2.}
\label{fig:affine_matrix}
\end{figure}

\subsection{Experimental Details}
\label{Experimental Details}

To ensure a fair comparison, we align most of our optimization parameters with those of OmniQuant \citep{shao2023omniquant}. Specifically, we apply INT$2$, INT$3$, and INT$4$ only-weight quantization to OPT \citep{zhang2022opt} and LLaMA$1$\&$2$ \citep{touvron2023llama, touvron2023llama2} models. Additionally, we employ grouping strategies of $64$ or $128$ for weight quantization with different bit configurations. The model's performance is evaluated on the WikiText2 \citep{merity2016pointer}, PTB \citep{marcus1994penn}, and C4 \citep{raffel2020exploring} datasets. For algorithm optimization, we randomly select $128$ segments from the WikiText2 training set, each containing $2048$ tokens, as the calibration dataset.
We leverage the scale of SmoothQuant \citep{xiao2023smoothquant} to initialize the diagonal of the affine transformation matrix. As the affine transformation is orthogonal to the translation operation, we incorporate the optimization of the learnable parameter shift and initialize it using Outlier Suppression+ \citep{wei2023outlier}. Our optimizer, learning rate, epoch, and learnable clipping of quantization parameters are consistent with OmniQuant \citep{shao2023omniquant}. The optimization process is performed on an Nvidia A$100$ GPU. We conduct a comparative analysis of various weight-only quantization methods, including GPTQ \citep{frantar2022gptq}, AWQ \citep{lin2023awq}, and OmniQuant \citep{shao2023omniquant}.

\end{document}